
\documentclass[11pt]{amsart}

\usepackage{microtype}
\usepackage{graphicx}
\usepackage{subfigure}
\usepackage{booktabs} 
\usepackage{multirow,multicol}
\usepackage{bm}
\usepackage{bcol}

\usepackage{hyperref}




\title{Safe Screening Rules for  $\ell_0$-Regression}
\author{Alper Atamt\"urk and Andr\'es G\'omez}
	

\thanks{ \noindent \hskip -5mm
	A. Atamt\"urk: Department of Industrial Engineering \& Operations Research, University of California, Berkeley, CA 94720.
	\texttt{atamturk@berkeley.edu}   \\
	A. G\'{o}mez: Daniel J. Epstein Department of Industrial \& Systems Engineering,  University of Southern California, CA 90089. \texttt{gomezand@usc.edu}
}

\usepackage{bbm}
\usepackage[mathlines,pagewise,left]{lineno}
\usepackage{amssymb}
\usepackage{color}
\usepackage{amsmath}
\usepackage{bcol}
\usepackage{multirow}
\usepackage{caption}
\usepackage{comment}
\usepackage[toc,page]{appendix}
\usepackage{graphicx} 
\usepackage{booktabs} 
\usepackage{pdflscape}
\usepackage[export]{adjustbox}

\usepackage{amsthm}
\newtheorem{proposition}{Proposition}

\theoremstyle{definition}

\theoremstyle{remark}
\newtheorem{remark}{Remark}

\newcommand{\R}{\ensuremath{\mathbb{R}}}
\newcommand{\REG}{\text{(REG) }}
\newcommand{\CARD}{\text{(CARD) }}
\newcommand{\MIP}{\text{(MIPR) }}
\newcommand{\MIPC}{\text{(MIPC) }}
\newcommand{\CQ}{\text{(CR) }}
\newcommand{\CQC}{\text{(CC) }}
\newcommand{\FD}{\text{(FDR) }}
\newcommand{\FDC}{\text{(FDC) }}
\newcommand{\zetaF}{\ensuremath{\zeta_{FR} }}
\newcommand{\zetaFC}{\ensuremath{\zeta_{FC} }}
\newcommand{\zetaC}{\ensuremath{\zeta_{CR}}}
\newcommand{\zetaCC}{\ensuremath{\zeta_{CC}}}

\usepackage[colorinlistoftodos,prependcaption,textsize=small]{todonotes}

\usepackage{natbib}
\bibpunct[, ]{(}{)}{,}{a}{}{,}

\begin{document}

\ignore{
\twocolumn[
\icmltitle{Safe Screening Rules for $\ell_0$-Regression}



\icmlsetsymbol{equal}{*}

\begin{icmlauthorlist}
\icmlauthor{Alper Atamt\"urk}{cal}
\icmlauthor{Andr\'es G\'omez}{usc}
\end{icmlauthorlist}

\icmlaffiliation{cal}{Department of Industrial Engineering and Operations Research, University of California, California, USA}
\icmlaffiliation{usc}{Department of Industrial and Systems Engineering, University of Southern California, Los Angeles, California, USA}

\icmlcorrespondingauthor{Alper Atamt\"urk}{atamturk@berkeley.edu}
\icmlcorrespondingauthor{Andr\'es G\'omez}{gomezand@usc.edu}

\icmlkeywords{Machine Learning, ICML}

\vskip 0.3in
]
}



\maketitle

\begin{abstract}
We give safe screening rules to eliminate variables from regression with $\ell_0$
regularization or cardinality constraint. These rules are based on guarantees that a feature may or 
may not be selected in an optimal solution. 
The screening rules can be computed from a convex relaxation solution in linear time, 
without solving the $\ell_0$ optimization problem. 
Thus, they can be used in a preprocessing step to safely remove variables from consideration apriori. 
Numerical experiments on real and synthetic data indicate that, 
on average, 76\% of the variables can be fixed to their optimal values, hence, reducing the computational burden 
for optimization substantially. Therefore, the proposed fast and effective screening rules extend the scope of
algorithms for $\ell_0$-regression to larger data sets.
\end{abstract}

\begin{center}
February 6, 2020
\end{center}

\BCOLReport{20.01}

\pagebreak

\section{Introduction}\label{sec:intro}
In machine learning and optimization communities, 
there is an increasing interest in regression models with $\ell_0$ and $\ell_2$ regularization:
\begin{align} 
&	\;\; \min_{x\in \R^n}
\|y-Ax\|_2^2+
\frac{1}{\gamma}\|x\|_2^2+\mu\|x\|_0, \text{ and}\label{eq:l0-l2}\tag{REG}\\
&	\;\; \min_{x\in \R^n}
\|y-Ax\|_2^2+
\frac{1}{\gamma}\|x\|_2^2\text{ s.t. }\|x\|_0\leq k,\label{eq:l0-l2_card}\tag{CARD}
\end{align}
where $A\in \R^{m\times n}$ is the model matrix, $y\in \R^m$ is the vector of response variables,
and $x\in \R^n$ is the vector of decision variables, i.e., regression coefficients to be estimated. 
Problem  \eqref{eq:l0-l2_card} has an explicit cardinality constraint on the number of non-zeros of $x$,
whereas \eqref{eq:l0-l2} is the regularized version of it. In these models,
the $\ell_0$ terms impose sparsity \citep{miller2002subset}, 
which is a necessity for large-dimensional model inference \citep{hastie2001elements,hastie2015statistical}, and
the $\ell_2$ (ridge) regularization \citep{hoerl1970ridge} imposes bias/shrinkage in the regression coefficients. The $\ell_2$ regularization can be interpreted, from the robust optimization perspective, as a correction term to account for uncertainty in the model matrix $A$ \citep{elghaoui1997robust,xu2009robustness}, and has been shown to improve the performance of sparse regression models in high-noise regimes \citep{mazumder2017subset}. 

The popular
$\ell_1$ \citep[lasso,][]{tibshirani1996regression} and $\ell_1$--$\ell_2$ \citep[elastic net,][]{zou2005regularization} regularizations  perform shrinkage and model selection simultaneously and, as convex proxies
for \eqref{eq:l0-l2}, they are very fast. However, thanks to substantial progress in the field of mixed-integer optimization (MIO), there is an increasing interest in solving the non-convex problems \eqref{eq:l0-l2}--\eqref{eq:l0-l2_card} directly. Indeed, several studies \citep{bertsimas2016best,cozad2014learning,gomez2018mixed,miyashiro2015subset,park2017subset} have shown that problems \eqref{eq:l0-l2}--\eqref{eq:l0-l2_card} with hundreds of variables can be solved to optimality simply by employing general purpose MIO solvers, and the resulting estimators outperform their $\ell_1$ counterparts. Nonetheless, solving the $\ell_0$ problems in this manner is orders-of-magnitude slower than solving the $\ell_1$ approximations and does not scale to problems with $n\geq 1$,$000$. Therefore, fast heuristics such as $\ell_1$ approximations, thresholding, 
local (but combinatorial) search algorithms or greedy methods 
\citep{hastie2017extended,hazimeh2018fast,xie2020scalable} may still be preferable in large-scale instances. 

The gap in the performance between exact methods for \eqref{eq:l0-l2}--\eqref{eq:l0-l2_card} and algorithms for a convex approximation is to be expected, as the $\ell_0$-regression is NP-hard. Moreover, there exist specialized software packages tailored to solving lasso and elastic net problems, such as \texttt{glmnet} \citep{friedman2010}, which include a variety of techniques specific to $\ell_1$ inference problems. 
In contrast, general purpose MIO solvers are not tailored to tackle \eqref{eq:l0-l2}--\eqref{eq:l0-l2_card}. 
Researchers have recently experimented with implementing branch-and-bound methods tailored for \eqref{eq:l0-l2}--\eqref{eq:l0-l2_card} \citep{bertsimas2017sparse,bertsimas2019sparse,dedieu2020learning,kimura2018minimization}, and the promising results indicate that there is substantial room for improvement for exact $\ell_0$-regression algorithms.

The purpose of this paper is to define \emph{screening rules} for nonconvex $\ell_0$-regression problems \eqref{eq:l0-l2}--\eqref{eq:l0-l2_card}. \citet{elghaoui2010safe} propose \emph{safe} rules for efficiently identifying regression variables that are guaranteed to be zero (null) in an optimal solution of the lasso problem, reducing the dimension of the problem to be solved a priori. \citet{tibshirani2012strong} subsequently propose \emph{strong} rules that may discard predictors that are part of an optimal lasso solution, but are quite effective in practice; these strong rules are incorporated into \texttt{glmnet}. Additional screening procedures have been proposed for other convex and lasso-type inference problems \citep{fercoq2015mind,ogawa2013safe,xiang2012fast,wang2013lasso,xiang2016screening}.  
To the best of our knowledge, no such screening rule is given to-date for the nonconvex $\ell_0$-regression problems \eqref{eq:l0-l2}--\eqref{eq:l0-l2_card}.

In MIO community, screening rules are used as part of preprocessing in branch-and-bound solvers \citep{atamturk2000conflict, martin-prep}. 
In contrast to convex optimization,  for MIO problems such as \eqref{eq:l0-l2}--\eqref{eq:l0-l2_card}, fixing a \emph{single} binary variable to zero reduces the number of feasible solutions by half; thus, the expected speedup of enumerative methods such as the branch-and-bound method is \emph{exponential in the number of variables} fixed. Therefore, effect of the screening rules on enumerative methods for non-convex optimization problems is significantly more than on polynomial-time algorithms for convex optimization problems.
Unfortunately, the existing screening rules in MIO solvers are tailored for linear mixed-integer problems and, as such, they are ineffective for \eqref{eq:l0-l2}--\eqref{eq:l0-l2_card}.

\subsection*{Contributions and outline} In this paper we propose \emph{safe screening rules} for nonconvex $\ell_0$-regression problems \eqref{eq:l0-l2}--\eqref{eq:l0-l2_card}. These rules can be applied to reduce the size of the problems, independent of the method used to
solve them.
Similar to the approach proposed by \citet{elghaoui2010safe} for lasso, the safe rules proposed are particularly effective in problems with large $\ell_0$--$\ell_2$ regularization terms, thus suitable for high noise regimes. The screening rules are obtained by exploiting convex perspective relaxations of the $\ell_0$ regression problems and using their Fenchel dual. The rules can be computed from a convex relaxation solution in \textit{linear time}, without having to solve the $\ell_0$ optimization problem. In our computational experiments with benchmark instances, the screening rules have been able to fix, on average, 76\% of the variables to their optimal values, and in some cases they have been sufficient to provably solve the problems outright. When used as preprocessing with a general purpose branch-and-bound solver, the screening procedure results in orders-of-magnitude speedups: \emph{instances previously requiring hours (or more) to prove optimality are solved in under 10 seconds with screening}. Consequently, 
the speed and effectiveness of the safe screening rules extend the scope of algorithms for $\ell_0$-regression problems to larger data sets.

The rest of the paper is organized as follows. In Section~\ref{sec:mip} we describe mixed-integer formulations and convex perspective relaxations of problems \eqref{eq:l0-l2}-\eqref{eq:l0-l2_card}. In Section~\ref{sec:screen}, we derive the safe screening rules for \REG and \CARD based on Fenchel duality of the perspective relaxations. In Section~\ref{sec:comp}, we present our computational experiments with synthetic and real benchmark instances from the literature. We conclude in Section~\ref{sec:conc} with a few final remarks.

\section{Mixed-integer \& perspective formulations}
\label{sec:mip}

Introducing indicator variables $z \in \{0,1\}^n$ such that $z_i= 0\implies x_i=0$, problem 
\eqref{eq:l0-l2} can be naturally formulated as the quadratic mixed-integer optimization problem 
\begin{subequations}\label{eq:mip0}
	\begin{align}
	\min_{x,z}\;&  \|y-Ax\|_2^2 +\frac{1}{\gamma}\sum_{i=1}^n {x_i^2} +\mu\sum_{i=1}^nz_i\\
	\quad			\text{s.t.}\;&x_i(1-z_i)=0, \quad \quad i=1,\ldots,n\label{eq:mip0_compl}\\
	&x \in \R^n, \ z\in \{0,1\}^n.\label{eq:mip0_int}
	\end{align}
\end{subequations}
For each $i$, the complementarity constraint $x_i(1-~z_i)=0$, ensures that $x_i=0$ whenever $z_i=0$. Such complementary constraints can be linearized via ``big-$M$" constraints $|x_i|\leq Mz_i$ \citep{bertsimas2016best} for a suitably large value of $M$. However, such formulations with large values of $M$ are weak and may lead to poor performance as a consequence. A stronger formulation
can be given by utilizing the perspective of the univariate quadratic function $x_i^2$: 
\begin{subequations}\label{eq:mip}
	\begin{align}
	\zeta_R	= \min_{x,z}\;&  \|y-Ax\|_2^2 +\frac{1}{\gamma}\sum_{i=1}^n\frac{x_i^2}{z_i} +\mu\sum_{i=1}^nz_i\\
	\MIP \quad			\text{s.t.}\;&x_i(1-z_i)=0, \quad \quad i=1,\ldots,n\label{eq:mip_compl}\\
	&x \in \R^n, \ z\in \{0,1\}^n,\label{eq:mip_int}
	\end{align}
\end{subequations}
where we adopt the convention that $x_i^2/z_i=0$ if $z_i=x_i=0$, and $x_i^2/z_i=+\infty$ if $z_i=0$ and $x_i\neq 0$.
The perspective function $x_i^2/z_i$ significantly strengthens the convex relaxation and can be formulated with 
conic quadratic constraints \cite{akturk2009strong,dong2015regularization,frangioni2006perspective,gunluk2010perspective,xie2020scalable}. The perspective formulation is also at the core of recent specialized branch-and-bound methods for sparse regression \cite{bertsimas2017sparse,bertsimas2019sparse}. 
A similar strong mixed-integer formulation of \eqref{eq:l0-l2_card} is 
\begin{subequations}\label{eq:mip-card}
	\begin{align}
	\zeta_C	= \min_{x,z}\;&  \|y-Ax\|_2^2 +\frac{1}{\gamma}\sum_{i=1}^n\frac{x_i^2}{z_i} \\
	\MIPC \quad			\text{s.t.}\; &
	\sum_{i=1}^n z_i\leq k\\
	&x_i(1-z_i)=0, \quad \quad i=1,\ldots,n\label{eq:mip-card_compl}\\
	&x \in \R^n, \ z\in \{0,1\}^n.\label{eq:mip-card_int}
	\end{align}
\end{subequations}

Convex relaxation of the mixed-integer programs are obtained by dropping complementary constraints \eqref{eq:mip_compl} and \eqref{eq:mip-card_compl}, and relaxing the integrality constraints in \eqref{eq:mip_int} and \eqref{eq:mip-card_int} to $z\in [0,1]^n$. Thus, we obtain the convex relaxation
\begin{subequations}\label{eq:cq}
	\begin{align}
	\ \zetaC =  \min_{x,z}\;&  \|y-Ax\|_2^2 +\frac{1}{\gamma}\sum_{i=1}^n\frac{x_i^2}{z_i} + \mu\sum_{i=1}^nz_i\\
	\CQ\quad\quad\quad&x \in \R^n, \ z\in [0,1]^n,\label{eq:cq_bdns}
	\end{align}
\end{subequations}
of \MIP, and the convex relaxation 
\begin{subequations}\label{eq:cq-card}
	\begin{align}
	\ \zetaCC =  \min_{x,z}\;&  \|y-Ax\|_2^2 +\frac{1}{\gamma}\sum_{i=1}^n\frac{x_i^2}{z_i} \\
	\CQC \quad \quad \quad	& 	\sum_{i=1}^n z_i\leq k,\ x \in \R^n, \ z\in [0,1]^n\label{eq:cq-card_bdns}
	\end{align}
\end{subequations}
of \MIPC.

The optimal solutions of \eqref{eq:cq} and \eqref{eq:cq-card} are good statistical estimators on their own right. Indeed, \citet{pilanci2015sparse} propose convex relaxations of \eqref{eq:l0-l2}--\eqref{eq:l0-l2_card}, which are later shown to be equivalent to perspective relaxations \citep{xie2020scalable}, and study their strength and conditions for delivering optimal solutions.

\section{Safe screening rules for \REG \& \CARD}
\label{sec:screen}

In this section, we give safe screening rules for  problems \MIP and \MIPC,
to fix the binary indicator variables at their optimal values before solving them. The screening rules require an upper bound on the optimal objective value of the mixed-integer optimization problems \MIP or \MIPC and an optimal solution of the perspective relaxation \CQ or \CQC, respectively. 

\begin{proposition}[Safe screening rules for \ref{eq:l0-l2}] \label{prop:screen-reg}
	Let $x^*$ be an optimal solution to \CQ with objective value $\zetaC$, $\varepsilon^*=y-Ax^*$,  $\delta_i=\left(A_i'\varepsilon^*\right)^2$,  $i=1, \ldots,n$, and let 
	$\bar \zeta$ be an upper bound on $\zeta_R$. Then any optimal solution to \MIP satisfies, 	
	\[
	z_i = 
	\begin{cases}0, & \text{ if }  \zetaC+\mu-\gamma\delta_i> \bar\zeta \\
	1, & \text{ if } \zetaC- \mu+\gamma\delta_i> \bar\zeta.
	\end{cases}
	\]

\end{proposition}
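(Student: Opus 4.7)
The plan is to derive the Fenchel dual of the perspective relaxation \CQ in closed form, identify $\varepsilon^{*}=y-Ax^{*}$ as its dual optimum, and then apply weak duality at this same $\varepsilon^{*}$ to the two restricted relaxations obtained by forcing $z_i=0$ or $z_i=1$. The gap between the dual value at $\varepsilon^{*}$ of each restricted problem and the original $\zetaC$ will be precisely $\max\{0,\mu-\gamma\delta_i\}$ or $\max\{0,\gamma\delta_i-\mu\}$, which is exactly what is needed for the two rules.

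First I would rewrite the quadratic loss via its Legendre transform $\|y-Ax\|_2^{2}=\max_{v\in\R^{m}}\{2v^{\top}(y-Ax)-\|v\|_2^{2}\}$ and appeal to convex/Fenchel duality (Sion) to interchange the outer $\min$ over $(x,z)\in\R^{n}\times[0,1]^{n}$ with the new $\max$ over $v$. The inner minimization then decouples over $i$: optimizing over $x_i$ gives $x_i=\gamma z_i(v^{\top}A_i)$, which collapses the $i$-th term to $z_i\bigl(\mu-\gamma(v^{\top}A_i)^{2}\bigr)$, and minimizing this over $z_i\in[0,1]$ yields $\min\{0,\mu-\gamma(v^{\top}A_i)^{2}\}$. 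Hence
\begin{equation*}
\zetaC \;=\; \max_{v\in\R^{m}}\; 2v^{\top}y-\|v\|_2^{2}+\sum_{i=1}^{n}\min\bigl\{0,\;\mu-\gamma(v^{\top}A_i)^{2}\bigr\}.
\end{equation*}
A direct KKT check, splitting into $z_i^{*}=0$, $z_i^{*}\in(0,1)$ and $z_i^{*}=1$, shows that the outer maximum is attained at $v^{*}=\varepsilon^{*}$, so by strong duality
\[
\zetaC \;=\; 2(\varepsilon^{*})^{\top}y-\|\varepsilon^{*}\|_2^{2}+\sum_{i=1}^{n}\min\{0,\,\mu-\gamma\delta_i\}.
\]

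Next, let $\zetaC^{z_i=t}$ denote the optimal value of \CQ with the extra constraint $z_i=t$. The dual derivation carries over verbatim except that the $i$-th inner minimum is no longer capped: fixing $z_i=1$ replaces its contribution by $\mu-\gamma(v^{\top}A_i)^{2}$, and fixing $z_i=0$ by $0$. Evaluating these (still valid) dual lower bounds at $v=\varepsilon^{*}$ and subtracting the identity for $\zetaC$ gives
\begin{align*}
\zetaC^{z_i=1} \;&\geq\; \zetaC+\max\{0,\,\mu-\gamma\delta_i\}\;\geq\;\zetaC+\mu-\gamma\delta_i,\\
\zetaC^{z_i=0} \;&\geq\; \zetaC+\max\{0,\,\gamma\delta_i-\mu\}\;\geq\;\zetaC-\mu+\gamma\delta_i.
\end{align*}
Since \CQ remains a relaxation of \MIP after fixing $z_i$, the optimal MIP value with $z_i=t$ is at least $\zetaC^{z_i=t}$. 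Thus, if $\zetaC+\mu-\gamma\delta_i>\bar\zeta\geq\zeta_R$, then no feasible MIP solution with $z_i=1$ can attain $\zeta_R$, forcing $z_i=0$ in every optimum; the second rule is symmetric.

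The main obstacle is the first paragraph: justifying the min--max interchange and verifying that $v^{*}=\varepsilon^{*}$ is dual-optimal, which requires a careful case analysis over the possible values of $z_i^{*}$ because of the piecewise inner minimum. Once the strong-duality identity is established, the restricted bounds follow from a single weak-duality inequality applied at the \emph{same} dual point, and the screening conclusion is immediate from the definition of $\bar\zeta$.
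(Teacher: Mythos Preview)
Your argument is correct. The paper also proves the result via Fenchel duality, but it dualizes a different piece of the objective: instead of writing $\|y-Ax\|_2^{2}=\max_{v}\{2v^{\top}(y-Ax)-\|v\|_2^{2}\}$ and carrying a dual variable $v\in\R^{m}$ (which you then identify with $\varepsilon^{*}$), the paper applies the Fenchel inequality to each perspective term $x_i^{2}/z_i$, obtaining dual variables $p\in\R^{n}$ with $p_i^{*}=2\gamma A_i^{\top}\varepsilon^{*}$ and the same key quantity $\mu-(p_i^{*})^{2}/(4\gamma)=\mu-\gamma\delta_i$. The paper even remarks that prior work dualizes the error term as you do and deliberately takes the other route. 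Your version has the advantage that the dual optimum is literally the residual vector, so the ``KKT check'' is just the stationarity condition $x_i^{*}=\gamma z_i^{*}A_i^{\top}\varepsilon^{*}$; the paper's version sidesteps any minimax/Sion justification because it only ever uses the Fenchel inequality as a lower bound and then verifies tightness. After that, both proofs coincide: evaluate the (still valid) dual of the restricted relaxation at the \emph{unrestricted} dual optimum, read off the increment $\max\{0,\mu-\gamma\delta_i\}$ or $\max\{0,\gamma\delta_i-\mu\}$, and compare to $\bar\zeta$.
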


\begin{proposition}[Safe screening rules for \ref{eq:l0-l2_card}] \label{prop:screen-card}
	Let $x^*$ be an optimal solution to \CQC with objective value $\zetaCC$, $\varepsilon^*=y-Ax^*$,  $\delta_i=\left(A_i'\varepsilon^*\right)^2$,  $i=1, \ldots,n$, $\delta_{[k]}$ be the $k$-th largest value of vector $\delta$, and let 
	$\bar \zeta$ be an upper bound on $\zeta_C$. Then any optimal solution to \MIPC satisfies, 
	\[
	z_i = 
	\begin{cases}0, &  \!\!\!\! \text{if }  \delta_i\leq \delta_{[k+1]}\text{ and } \ \zetaCC-\gamma(\delta_i-\delta_{[k]}) > \bar\zeta \\
	1, & \!\!\!\! \text{if } \delta_i\geq \delta_{[k]}\text{ and } \ \zetaCC+\gamma(\delta_i-\delta_{[k+1]})> \bar\zeta.
	\end{cases}
	\]
\end{proposition}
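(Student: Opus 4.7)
My plan is to obtain both screening rules by weak duality applied to a Fenchel/Lagrangian dual of the perspective relaxation \CQC, together with an analogous dual of the restricted relaxation obtained by fixing $z_i = v$ for $v \in \{0,1\}$. Concretely, I first construct a dual function $g(u)$ for \CQC and exhibit a dual feasible solution $u^* = 2\varepsilon^*$ at which $g(u^*) = \zetaCC$. Then, for each candidate fixing $z_i = v$, I derive the dual $g_v^{(i)}(u)$ of the restricted \CQC, evaluate it at the same $u^*$, and compare with $\bar\zeta$: whenever the resulting lower bound on \MIPC with $z_i = v$ exceeds $\bar\zeta$, this fixing cannot be optimal, forcing $z_i$ to the opposite value.

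To derive $g(u)$, I introduce the residual $w = y - Ax$ and dualize only the equation $w + Ax = y$ with multiplier $u$, keeping the cardinality constraint $\sum_j z_j \le k$ and the box constraints $z \in [0,1]^n$ as explicit primal restrictions. Minimizing the Lagrangian over $w$ gives $-\tfrac{1}{4}\|u\|^2$; minimizing the perspective term $x_j^2/(\gamma z_j)$ over $x_j$ (for $z_j > 0$) contributes $-\tfrac{\gamma z_j}{4}(A_j'u)^2$. The remaining minimization over $z$ sets $z_j = 1$ for the $k$ indices with largest $(A_j'u)^2$, yielding
\[
g(u) = u'y - \tfrac{1}{4}\|u\|^2 - \tfrac{\gamma}{4}\sum_{j \in S_k(u)}(A_j'u)^2,
\]
where $S_k(u)$ indexes the top $k$ values of $(A_j'u)^2$. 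Using the first-order optimality of $x^*$ in \CQC, namely $x_j^* = \gamma z_j^* A_j'\varepsilon^*$, one verifies that $u^* := 2\varepsilon^*$ is an optimal dual multiplier with $g(u^*) = \zetaCC$ (strong duality holds by convexity and Slater's condition).

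The dual $g_v^{(i)}$ of \CQC restricted to $z_i = v$ is obtained by repeating the derivation: for $v = 1$ the $i$-th perspective term contributes $-\tfrac{\gamma}{4}(A_i'u)^2$ unconditionally and the cardinality budget drops to $k-1$ on the remaining indices, while for $v = 0$ the $i$-th index is simply removed with budget still $k$. At $u^*$ we have $(A_j'u^*)^2 = 4\delta_j$, so evaluating $g_v^{(i)}(u^*)$ reduces to computing how the restricted top-$k$ (respectively top-$(k-1)$) set on the complement of $i$ differs from the unrestricted $S_k(u^*)$. When $i \notin S_k(u^*)$, i.e.\ $\delta_i \le \delta_{[k+1]}$, forcing $z_i = 1$ adds $\delta_i$ to the selected set while discarding the smallest element $\delta_{[k]}$, giving $g_1^{(i)}(u^*) = \zetaCC - \gamma(\delta_i - \delta_{[k]})$. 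Symmetrically, when $i \in S_k(u^*)$, i.e.\ $\delta_i \ge \delta_{[k]}$, forcing $z_i = 0$ removes $\delta_i$ and replaces it by the next value $\delta_{[k+1]}$, giving $g_0^{(i)}(u^*) = \zetaCC + \gamma(\delta_i - \delta_{[k+1]})$.

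To finish, let $\zeta_C^{(i,v)}$ denote the optimum of \MIPC with $z_i = v$. Weak duality applied to the restricted relaxation yields $\zeta_C^{(i,v)} \ge g_v^{(i)}(u^*)$, so whenever $g_v^{(i)}(u^*) > \bar\zeta \ge \zeta_C$ we get $\zeta_C^{(i,v)} > \zeta_C$, meaning that the fixing $z_i = v$ is incompatible with any optimal solution of \MIPC; substituting the explicit expressions for $g_1^{(i)}(u^*)$ and $g_0^{(i)}(u^*)$ yields the two claimed rules. The main obstacle I anticipate is the bookkeeping in the combinatorial case analysis: one must carefully track how the top-$k$ subset on the complement of $i$ changes under the restriction $z_i = v$, handle ties among the $\delta_j$, and verify that fractional components of $z^*$ in \CQC (which by KKT only occur at the boundary $\delta_i$ between $\delta_{[k]}$ and $\delta_{[k+1]}$) do not invalidate the bounds.
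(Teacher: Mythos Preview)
Your proposal is correct and follows the same overall strategy as the paper: build a dual of the perspective relaxation \CQC, identify the optimal dual point from the primal optimum $x^*$, and then lower-bound the restricted problems $\{z_i=1\}$ and $\{z_i=0\}$ by evaluating the corresponding restricted duals at that same dual point. The one genuine difference is in how the dual is obtained. The paper takes Fenchel conjugates of the perspective terms $x_j^2/z_j$ directly (introducing variables $p_j$ with $p_j^*=2\gamma A_j'\varepsilon^*$), and explicitly notes that this contrasts with the residual-based dualization used by \citet{pilanci2015sparse} and \citet{bertsimas2017sparse}. Your route---introducing $w=y-Ax$ and Lagrange-dualizing the linear constraint $w+Ax=y$ with multiplier $u^*=2\varepsilon^*$---is precisely that residual-based alternative. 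After minimizing out $w$ and $x$, the two duals coincide (your coefficient $\tfrac{\gamma}{4}(A_j'u)^2$ equals their $\tfrac{1}{4\gamma}p_j^2$ under $p_j=\gamma A_j'u$), so the restricted lower bounds and hence the screening rules are identical. Your derivation makes the combinatorial ``swap one element in/out of the top-$k$ set'' step more explicit, while the paper's is terser because it reuses the machinery already built for \REG. Your worry about ties is harmless: when $\delta_{[k]}=\delta_{[k+1]}$ the inequalities reduce to $\zetaCC>\bar\zeta$, which cannot hold since $\zetaCC\le\zeta_C\le\bar\zeta$, so no variable is (incorrectly) fixed.
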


We prove Propositions~\ref{prop:screen-reg} and \ref{prop:screen-card} using Fenchel duality in \S\ref{sec:derivation}. Before doing so, in \S\ref{sec:cost}, we discuss the computational cost of implementing the screening rules.

\subsection{Computational cost}\label{sec:cost}

Computing optimal solutions to the convex perspective relaxations can be done in polynomial time, while finding upper bounds for the non-convex mixed-integer optimization can be accomplished via fast heuristics, thus the screening rules require substantially less time than solving \eqref{eq:l0-l2}-\eqref{eq:l0-l2_card}  to optimality. In this section we give pointers on how to do so effectively, and argue that in the context of branch-and-bound methods the overhead of the screening rules is linear in $n$.

\subsubsection*{Solving perspective relaxations} Formulations \CQ and \CQC can be conveniently solved using off-the-shelf conic quadratic solvers \citep{akturk2009strong,gunluk2010perspective} --- this is the approach we use here. \citet{pilanci2015sparse} use a projected quasi-Newton method to solve \CQC which, they argue, is comparable in complexity to the lasso for low values of $k$. \citet{bertsimas2017sparse,bertsimas2019sparse} use a linear outer approximation method which they report performs faster than the lasso.

In fact, mixed-integer optimization methods based on formulations \MIP \ or \MIPC will solve problems \CQ or \CQC at the root node of the branch-and-bound tree anyway. Thus, in this context, an optimal solution of the perspective relaxation can be obtained without an additional cost. 

\subsubsection*{Obtaining upper bounds} There exist extensive work on heuristics for sparse regression, including stepwise selection methods \citep{efroymson1966stepwise} and other methods mentioned in \S\ref{sec:intro}. Branch-and-bound methods, both based on off-the-shelf solvers or recent specialized implementations, use heuristics to warm-start the solvers and may even require them to initialize big-$M$ values \citep{bertsimas2016best,dedieu2020learning}. Thus, upper bounds in this context are available without incurring in additional costs.

In addition, feasible solutions for sparse regression problems can be obtained directly from convex relaxations. For example, \citet{pilanci2015sparse} use randomized rounding to obtain high quality feasible solutions of perspective relaxations. In our computations with cardinality constrained problems, we use a simpler rounding mechanism informed by Proposition~\ref{prop:screen-card}: given an optimal solution for \CQC, we set $z_i=1$ for the $k$ largest values of $\delta$ (breaking ties arbitrarily), and set $x$ equal to the least squares estimator corresponding to the chosen variables.

\subsubsection*{Additional operations} It is easy to see that for problem \eqref{eq:l0-l2},
given a convex relaxation solution and upper bound, the screening rule of Proposition~\ref{prop:screen-reg} can be computed in $O(n)$ time
with a single pass along the variables. 
For \eqref{eq:l0-l2_card}, given a convex relaxation solution and upper bound, $\delta_{[k]}$ and $\delta_{[k+1]}$ can be selected in $O(n)$ (without the need for sorting) and then the screening rule of
Proposition~\ref{prop:screen-card} can be computed in $O(n)$ time as well.

\subsection{Derivation of the screening rules}\label{sec:derivation}

We now derive the screening rules using Fenchel duality. Note that, whereas \citet{pilanci2015sparse} and \citet{bertsimas2017sparse} derive their methods based on the Fenchel dual of the error term $\|y-Ax\|_2^2$, we instead use the dual of the perspective terms.

\subsubsection{Derivation of Proposition~\ref{prop:screen-reg}}\label{sec:derivation_reg}

Let $h^*(p,q)$ be the bivariate convex conjugate of the perspective function $x^2/z$, i.e.,
\begin{align} \label{eq:fenchel} 
h^*(p,q)=\max_{x,z}px+qz-\frac{x^2}{z} \cdot
\end{align}
From Fenchel's inequality, we have
\begin{align} \label{eq:fenchelineq} 
px+qz-h^*(p,q)\leq h(x,z)
\end{align}
for any $p,q, x,z \in \R$. Employing \eqref{eq:fenchelineq} for each term to get a lower bound on \CQ and maximizing the lower bound, we obtain the Fenchel dual for \eqref{eq:mip}:
\begin{subequations}\label{eq:dual}
	\begin{align}
	\max_{p,q\in \R^n}\min_{x,z}\;&  \|y-Ax\|_2^2 +\mu\sum_{i=1}^nz_i \\
	&+ \frac{1}{\gamma}\sum_{i=1}^n\Big(p_ix_i+q_iz_i-h^*(p_i,q_i)\Big)\\
	\text{s.t.}\;& x \in \R^n, \ z\in [0,1]^n.
	\end{align}
\end{subequations}

Indeed, the conjugate
function $h^*$ can be computed in closed form. Since \eqref{eq:fenchel} is concave in both $x$ and $z$,
by taking derivatives with respect to $x$ and $z$ and setting to zero, we find the optimality conditions:
\begin{align}
p-\frac{2x}{z} = 0 \label{eq:derivativeX}\\
q+\left(\frac{x}{z}\right)^2 = 0,\label{eq:derivativeY}
\end{align}
since, otherwise, \eqref{eq:fenchel} is unbounded. The optimality conditions imply that
\begin{align*}
\frac{p^2}{4}=-q
\text{ and }  px+qz-\frac{x^2}{z}=0,
\end{align*}
where the second inequality is obtained by multiplying \eqref{eq:derivativeX} by $x$ and \eqref{eq:derivativeY} by $y$, and summing them up. Thus, 
$$h^*(p,q)=\begin{cases}0, & \text{if }q=-p^2/4\\
+\infty, & \text{otherwise.}\end{cases}
$$

Therefore, we find that \eqref{eq:dual} reduces to
\begin{subequations}\label{eq:dual_explicit}
	\begin{align}
	\zetaF		=	\max_{p\in \R^n}\min_{x,z}\;&  \|y-Ax\|_2^2  +\mu\sum_{i=1}^nz_i \\
	\FD \quad \quad \quad \quad	\ \ &	+\frac{1}{\gamma}\sum_{i=1}^n\Big(p_ix_i-\frac{p_i^2}{4}z_i\Big) \\
	\text{s.t.}\;& x \in \R^n, \ z\in [0,1]^n.
	\end{align}
\end{subequations}
In fact, if $\max$ and $\min$ are interchanged in \eqref{eq:dual_explicit}, then $p_i^*=2\frac{x_i}{z_i}$ (if $x_i$ and $z_i$ are both non-zero) and we recover precisely \CQ; thus, there is no duality gap between \CQ and \FD and we have 
$\zetaC  = \zetaF$.

In optimal solutions of the inner minimization problem we have 
$$
z_i =\begin{cases}0, & \text{if }  \mu-\frac{p_i^2}{4\gamma}> 0 \\
1, & \text{if } \mu-\frac{p_i^2}{4\gamma}< 0\\
\in [0,1] & \text{otherwise}.\end{cases}
$$
and $A'Ax=A'y-\frac{1}{2\gamma} p$. Note that if $\mu-\frac{p_i^2}{4\gamma}\neq 0$ for all $i=1,\ldots,n$, then the optimal solution of the inner minimization problem in \FD is unique; in this case, by strong duality, that solution is also optimal for \CQ and, since it is integral, it is in fact optimal for \MIP as well. However, if $\mu-\frac{p_i^2}{4\gamma}= 0$ for some $i$, then the inner minimization problem in \FD has an infinite number of optimal solutions and the solution of \CQ may not be integral.

Now, let $x^*$ be an optimal solution of \CQ and $\varepsilon^*=y-Ax^*$ be the vector of residuals. 
Given $x^*$, a corresponding optimal dual solution $p^*$ can be recovered as $A'Ax^*=A'y-\frac{1}{2\gamma} p^*$, or $p^*={2\gamma}A'\varepsilon^*$. Moreover, we find that 
$$\mu-\frac{(p_i^*)^2}{4\gamma}=\mu-{\gamma(A_i'\varepsilon^*)^2}=\mu-\gamma\delta_i,$$ 
where $A_i$ is the $i$-th column of $A$. Consequently, optimal $(p^*, z^*)$  for \FD can be recovered from $\varepsilon^*$. 
We can now give the proof of Proposition~\ref{prop:screen-reg}.

\begin{proof}[Proof of Proposition~\ref{prop:screen-reg}]
	Suppose $\mu-\gamma \delta_i>0$ and thus $z_i = 0$ in an optimal solution to \FD. Note that in this case the inequality  $\zetaC- \mu+\gamma\delta_i> \bar\zeta$ is never satisfied. Let $\zetaF(z_i=1)$ be the optimal objective value of the Fenchel dual with the additional constraint $z_i=1$. Note that
	$$ \zetaF+\mu-\gamma\delta_i=\zetaF+\mu-{(p_i^*)^2}/4\gamma \le \zetaF(z_i=1),$$
	and the inequality is tight if the dual variables $p^*$ are still optimal optimal after introducing the constraint $z_i=1$. 
	Thus, if
	$ \zetaF+\mu-\gamma\delta_i > \bar \zeta$, we conclude that any feasible solution for \CQ with $z_i=1$ has an objective worse than the upper bound and, in particular, there exists no optimal solution of \MIP with $z_i=1$. 
	
	Similarly, suppose $\mu-\gamma \delta_i<0$ and $z_i = 1$ in an optimal solution to \FD. Since
	$ \zetaF-\mu+\gamma\delta_i \le \zetaF(z_i=0)$, if the lower bound
	$ \zetaF+\mu-{(p_i^*)^2}/4\gamma > \bar \zeta$, we conclude that there exists no optimal MIP solution with $z_i=0$.
\end{proof}

\begin{remark}
	If $A'A$ is invertible, then an explicit formulation of the dual problem \eqref{eq:dual_explicit} can be obtained as
	\begin{align*}
	\max_{p\in \R^n}\;&\|y\|_2^2-\left(A'y-\frac{1}{2\gamma} p\right)'(A'A)^{-1} \bigg (A'y-\frac{1}{2\gamma} p \bigg) \\
	&		+\sum_{i=1}^n\min\left\{0,\mu-\frac{p_i^2}{4\gamma}\right\}.
	\end{align*}
\end{remark}

\subsubsection{Derivation of Proposition~\ref{prop:screen-card}}\label{sec:derivation_card}

Using identical arguments as in \S\ref{sec:derivation_reg}, we find the Fenchel dual of \CQC as
\begin{subequations}\label{eq:dualCard_explicit}
	\begin{align}
	\zetaFC \!  = \!	\max_{p\in \R^n}\min_{x,z}\;&  \|y \!-\!Ax\|_2^2 +\! \frac{1}{\gamma}\sum_{i=1}^n\! \Big(p_ix_i\!-\!\frac{p_i^2}{4}z_i\Big)\\
	\FDC \quad \quad 	\text{s.t.}  &\sum_{i=1}^n z_i\leq k,\; x \in \R^n,  z\in [0,1]^n.
	\end{align}
\end{subequations}

As for \FD if $\max$ and $\min$ are interchanged, then $p_i^*=2\frac{x_i}{z_i}$ (if $x_i$ and $z_i$ are both non-zero) and we recover precisely \CQC; thus, there is no duality gap between \CQC and \FDC and we have 
$\zetaCC  = \zetaFC$.

Observe that for the inner minimization problem, an optimal solution satisfies
$z_i=1$ for indices with the largest $k$ values of 
$\frac{p_i^2}{4\gamma}$ and $z_i=0$ otherwise. Moreover, if there is no tie between the $k$-th and $(k+1)$-st largest value in an optimal solution of \FDC, then this solution is unique and is also optimal\footnote{A similar result is given in \citep[][Prop. 1]{pilanci2015sparse}.} for \CQC and \MIPC. Otherwise, if there is a tie, then \CQC may not have optimal solutions integral in $z$. 

Now, let $x^*$ be an optimal solution of the convex relaxation of \MIPC, and let $\varepsilon^*=y-Ax^*$ be the vector of residuals. Then, the corresponding optimal dual solution $p^*$ can be recovered as $A'Ax^*=A'y-\frac{1}{2\gamma} p^*$, or $p^*={2\gamma}A'\varepsilon^*$. Moreover, we find that $$-\frac{(p_i^*)^2}{4\gamma}=-\gamma{(A_i'\varepsilon^*)^2}=-\gamma \delta_i.$$ 

\begin{proof}[Proof of Proposition~\ref{prop:screen-card}]
	Suppose $\delta_i\leq \delta_{[k+1]}$. Then,
	$z_i = 0$ in an optimal solution of the inner minimization in \FDC; let $z_{[k]}$ be the indicator variables corresponding to the term $\delta_{[k]}$. Let $\zetaFC(z_i=1)$ be the optimal objective value of Fenchel dual with the additional constraint $z_i=1$. The cardinality constraint implies that $z_{[k]}=0$ for an optimal solution of this problem. 
	Since
	$\zetaCC-\gamma\delta_i  +\delta_{[k]} \le \zetaFC(z_i=1)$, if the lower bound
	$ \zetaCC-\gamma\delta_i  +\delta_{[k]} > \bar \zeta$, we conclude that there exists no optimal solution to \MIPC with $z_i=1$.
	
	Similarly, suppose
	$\delta_i\geq \delta_{[k]}$; then, we have
	$z_i = 1$ in an optimal solution of the inner minimization of \FD. 
	Let $\zetaFC(z_i=0)$ be the objective value of the Fenchel dual with the additional constraint $z_i=0$. 
	Since
	$ \zetaCC+\gamma\delta_i - \delta_{[k+1]} \le \zetaFC(z_i=0)$, if the lower bound
	$ \zetaCC+\gamma\delta_i - \delta_{[k+1]}  > \bar \zeta$, 
	we conclude that there exists no optimal solution to \MIPC with $z_i=0$.
\end{proof}

\section{Computational experiments}
\label{sec:comp}

In this section we report on our computational experiments to test the effectiveness of 
the screening rules for the cardinality constrained sparse regression problem \CARD.
As the statistical merits of solving \CARD 
are, by now, extensively documented in the literature \citep{atamturk2019rank-one,bertsimas2016best,bertsimas2017sparse,bertsimas2019sparse,hastie2017extended,hazimeh2018fast,mazumder2017subset}, we focus on the impact of the safe screening rules on solving \MIPC efficiently. In our computations we use CPLEX 12.8 mixed-integer optimizer. All experiments are performed on a laptop with eight Intel(R) Core(TM) i7-8550 CPUs and 16GB RAM. In \S\ref{sec:synt} we test the screening rules on ``standard" synthetic data sets \citep{atamturk2019rank-one,bertsimas2016best,bertsimas2019sparse,hastie2017extended,xie2020scalable}, and in \S\ref{sec:real} we use the real data sets reported in Table~\ref{tab:datasets}. The ``Diabetes" data set is first used by \citet{efron2004least}, whereas the other data sets are obtained from the UCI Machine Learning Repository \citep{dua2019}.

\begin{table}[!h]
	\begin{center}
		\small
		\caption{Real data sets used.}
		\label{tab:datasets}
		\begin{tabular}{ l |c c}
			\hline
			\textbf{Name} & $\bm{n}$ & $\bm{m}$\\
			\hline
			Diabetes &64 &442\\
			Crime & 100 & 1993\\
			Parkinsons & 753 & 756\\
			CNAE & 856 & 1,081\\
			Micromass & 1,300 & 360\\
			\hline
		\end{tabular}
		
	\end{center}
\end{table}

\subsection{Synthetic data}\label{sec:synt}

We follow the data generation methodology of \citet{bertsimas2019sparse}, where instances are generated according to a number of features of $n$, number of rows $m$, true sparsity $k$, regularization parameter $\gamma$, autocorrelation parameter $\rho$, and signal noise ratio (SNR).
In our experiments, we let $n=1$,$000$, $m=500$, $k\in \{10,30,50\}$, $\gamma=2^i\gamma_0$ with $i\in \{-1,0,2,4\}$ and $\gamma_0=\frac{n}{mk\max_{i}\|a_i\|_2^2}$ (where $a_i$ denotes the $i$-th row of $A$), $\rho\in \{0.2,0.5,0.7\}$, and $\text{SNR}\in \{0.05,1.00,6.00\}$. The parameters $m$, $\gamma$, $\rho$ and SNR coincide with the values used in \citet{bertsimas2019sparse}. 
Our instances are smaller with $n=1,000$ and $k\in \{10,30,50\}$ as we use a general purpose mixed-integer solver rather than a
tailored solution method for \MIPC as in \citet{bertsimas2019sparse}. 
Several other papers in the literature generate data similarly.
Finally, we set the time limit to ten minutes. 

\begin{figure}[!h]
	\begin{center}
		\includegraphics[trim={10.8cm 6cm 10.8cm 6cm},clip,width=0.8\columnwidth]{./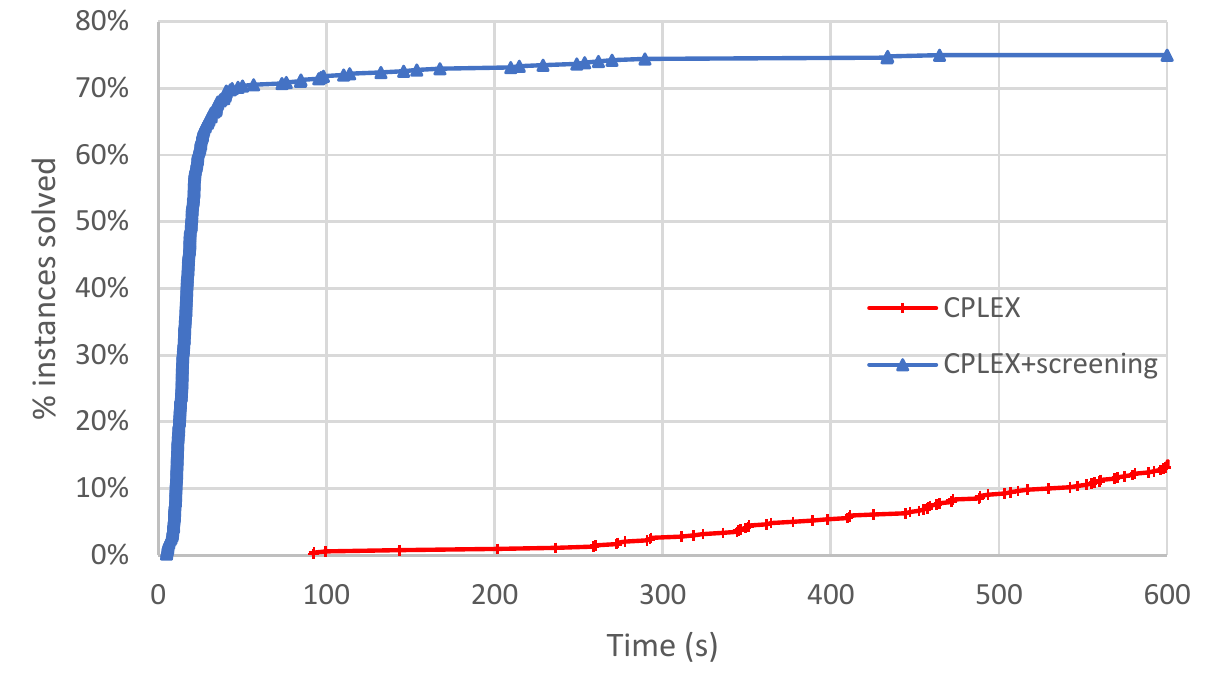}
	\end{center}
	\caption{Number of instances solved as a function of the time (sec.)} 
	\label{fig:performance}
\end{figure}

\begin{figure}[!h]
	\begin{center}
		\includegraphics[trim={10.8cm 6cm 10.8cm 6cm},clip,width=0.8\columnwidth]{./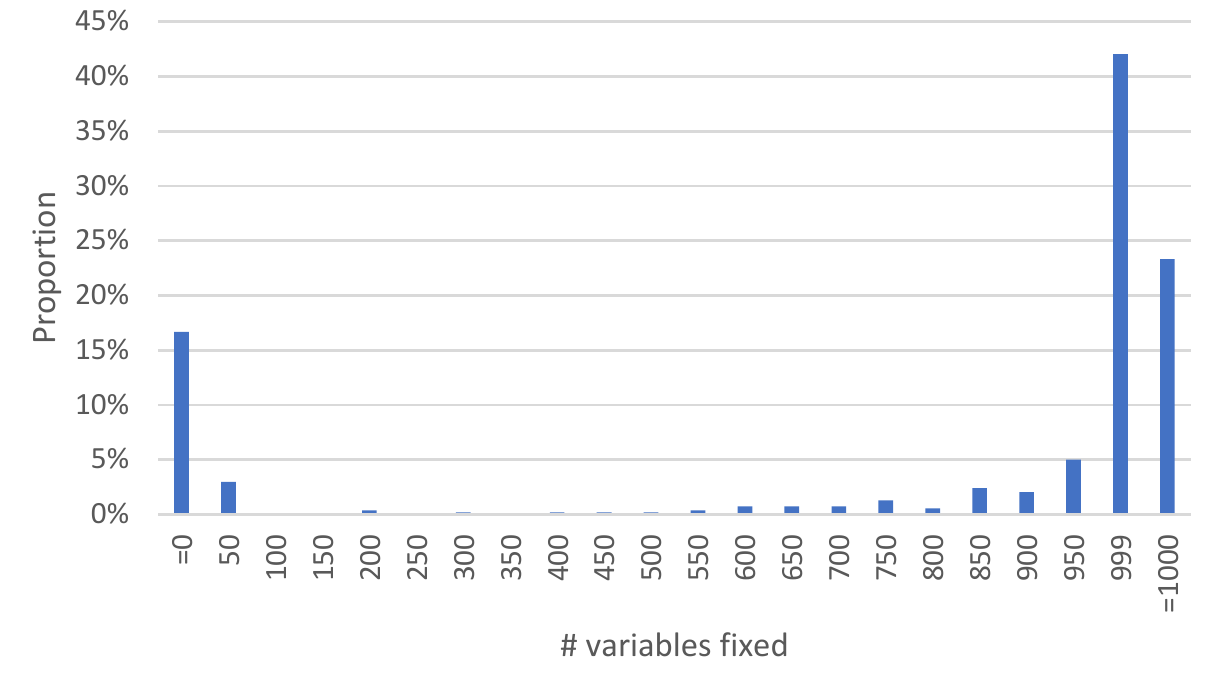}
	\end{center}
	\caption{Distribution of the number of variables fixed on synthetic instances.}
	\label{fig:distribution}
\end{figure}

Figures \ref{fig:performance} and~\ref{fig:distribution}  show aggregated results over all 540 synthetic instances tested.
Figure~\ref{fig:performance} depicts the performance profiles of CPLEX with and without the safe screening rules proposed in the paper. We see that default CPLEX struggles with instances of this size, and is able to solve only 14\% of the instances within the time limit; similar 
performance for general purpose MIP solvers has been observed in the literature for instances with $n=1,000$  \citep{hastie2017extended,xie2020scalable}. In contrast, when the screening rules are incorporated, the performance improves substantially: it only takes 11 seconds to solve the same 14\% of the instances, and 75\% of the instances are provably solved to optimality within the ten-minuted time limit. Thus, for the synthetic instances that are solved to optimality by both methods, \emph{the screening procedure results in a $60\times$ speedup}. In fact, as Figure~\ref{fig:distribution} shows, the screening procedures alone are sufficient to prove optimality for 23\% of the instances, and are able to fix 75\% or more of the variables in an additional 52\% of the instances. There is, however, a small portion of the instances where few or no variables were fixed by the screening procedure. 

Table~\ref{tab:fixed} presents detailed information on the number of variables fixed as a function of the parameters $k$, $\gamma$, $\rho$, and SNR. Each entry in the table corresponds to an average over five identically generated instances.
As the parameter $k$ decreases (imposing higher $\ell_0$ regularization) and the parameter $\gamma$ increases (imposing higher $\ell_2$ regularization), the screening procedures become more effective at fixing variables. We also observe that the screening rules are more effective when the signal-noise ratio is large, while the parameter $\rho$ plays a relatively minor role.

\begin{table}[!tb]
	\begin{center}
		\small
		\setlength{\tabcolsep}{2pt}
		\caption{Number of variables fixed in synthetic instances with $n=1,000$.}
		\label{tab:fixed}
		\begin{tabular}{ c c c|c c c | c c c | c c c | c}
			\hline
			&& $\bm{k}$& \multicolumn{3}{c|}{$\bm{10}$} & \multicolumn{3}{c|}{$\bm{30}$}& \multicolumn{3}{c|}{$\bm{50}$}&\multirow{2}{*}{\textbf{Average}}\\
			$\bm{\gamma}$&\textbf{SNR}&$\bm{\rho}$&$\bm{.2}$&$\bm{.5}$&$\bm{.7}$&$\bm{.2}$&$\bm{.5}$&$\bm{.7}$&$\bm{.2}$&$\bm{.5}$&$\bm{.7}$&\\
			\hline
			\multirow{3}{*}{$\bm{2^{-1}\gamma_0}$}&$\bm{0.05}$&&995&996&993&997&791&983&996&792&934&\multirow{3}{*}{$\bm{930\pm 219}$}\\
			&$\bm{1.00}$&&1,000&998&999&997&912&473&906&982&950&\\
			&$\bm{6.00}$&&1,000&1,000&997&993&988&987&1000&752&692&\\
			\hline
			
			\multirow{3}{*}{$\bm{2^{0}\gamma_0}$}&$\bm{0.05}$&&983&986&991&980&972&988&988&989&800&\multirow{3}{*}{$\bm{967\pm 147}$}\\
			&$\bm{1.00}$&&998&997&996&958&977&988&973&789&994\\
			&$\bm{6.00}$&&1,000&999&995&997&993&997&785&997&992\\
			\hline
			
			\multirow{3}{*}{$\bm{2^{2}\gamma_0}$}&$\bm{0.05}$&&553&245&621&893&640&751&804&952&902&\multirow{3}{*}{$\bm{886\pm 232}$}\\
			&$\bm{1.00}$&&991&988&977&971&969&940&968&976&962\\
			&$\bm{6.00}$&&1,000&1,000&983&978&980&974&962&975&968\\
			\hline
			
			\multirow{3}{*}{$\bm{2^{4}\gamma_0}$}&$\bm{0.05}$&&0&0&0&0&0&0&1&111&0&\multirow{3}{*}{$\bm{276\pm 410}$}\\
			&$\bm{1.00}$&&577&194&174&302&455&457&40&166&144\\
			&$\bm{6.00}$&&1,000&999&597&939&379&109&40&297&471&\\
			\hline
			\multicolumn{3}{c|}{\textbf{Average}}&\multicolumn{3}{c|}{$\bm{801\pm 385}$}&\multicolumn{3}{c|}{$\bm{770\pm 377}$}&\multicolumn{3}{c|}{$\bm{723\pm 409}$}&\textbf{$\bm{765\pm 391}$}\\
			\hline
		\end{tabular}
	\end{center}
\end{table}

\subsection{Real data} \label{sec:real}
We test the safe screening procedure in the data sets given in Table~\ref{tab:datasets}.  For each data set, we solve problem \MIPC with 
$k\in \{10,20,30\}$.  \citet{bertsimas2019sparse} indicate in the documentation of their code\footnote{\url{https://github.com/jeanpauphilet/SubsetSelectionCIO.jl}.} that setting $\gamma=1/\sqrt{m}$ is an appropriate scaling for regression problems. For this value of $\gamma$, on average, 98.2\% of the variables are fixed by the screening procedure, and all instances are solved in four seconds. To better understand the effectiveness of the screening procedures for a broader set of parameters, we let $\gamma=2^i\gamma_0$ with $i\in\{-1,0,1,2,3,4,5,6,7,8\}$ and $\gamma_0$ as described in \S\ref{sec:synt}.

\begin{figure}[!h]
	\begin{center}
		\includegraphics[trim={10.8cm 6cm 10.8cm 6cm},clip,width=0.8\columnwidth]{./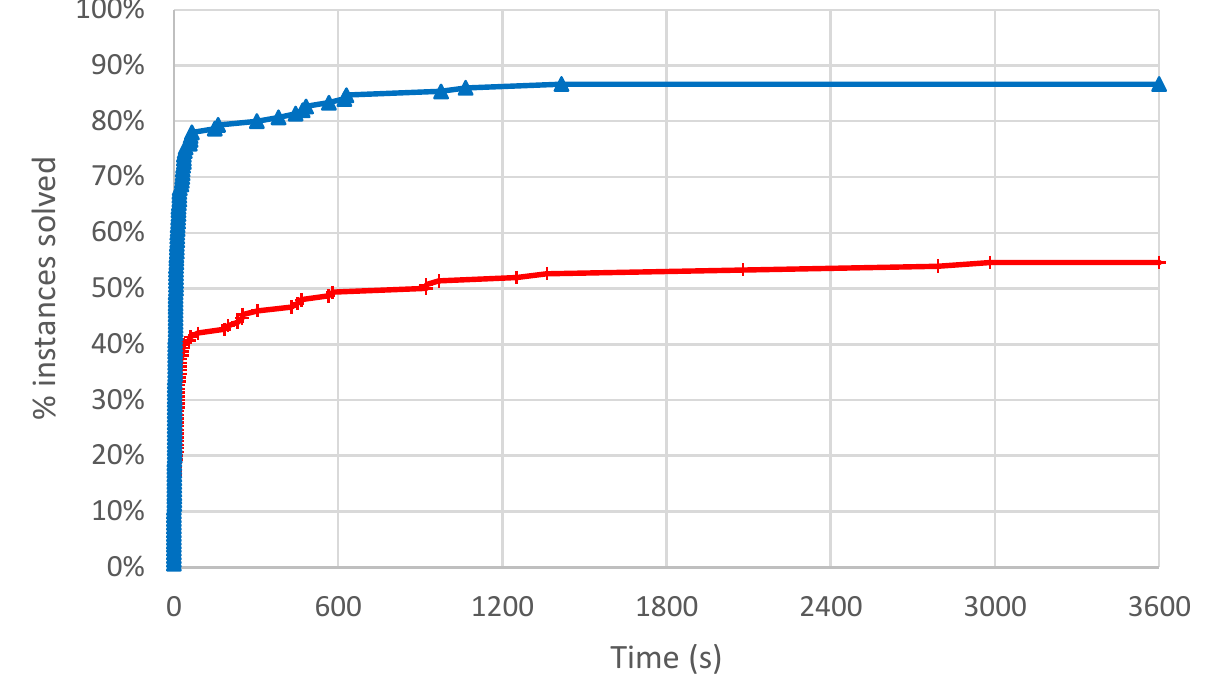}
	\end{center}
	\caption{Number of instances solved as a function of the time (sec.)} 
	\label{fig:performanceReal}
\end{figure}

\begin{figure}[!h]
	\begin{center}
		\includegraphics[trim={10.8cm 6cm 10.8cm 6cm},clip,width=0.8\columnwidth]{./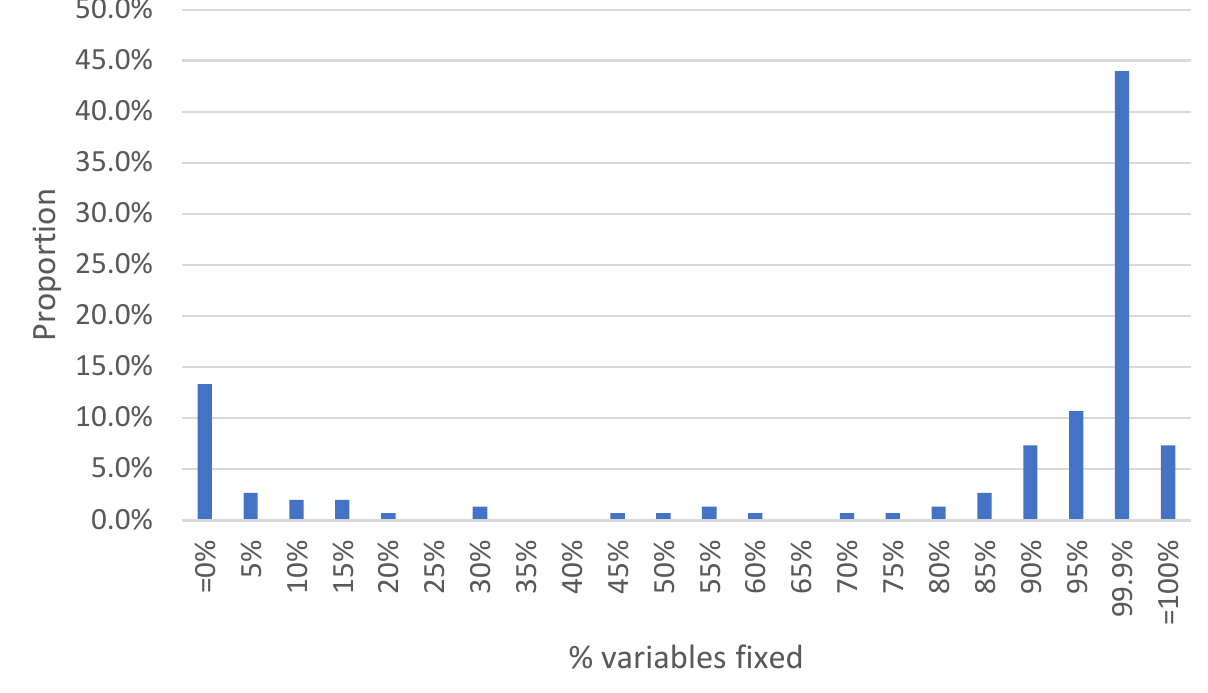}
	\end{center}
	\caption{Distribution of the number of variables fixed on instances with real data.}
	\label{fig:distributionReal}
\end{figure}

Figures \ref{fig:performanceReal} and \ref{fig:distributionReal}  
display the aggregated results over 150 instances tested with a time limit of one hour.
The performance profile in Figure~\ref{fig:performanceReal} shows that default CPLEX is able to solve 55\% of the instances in one hour. When the screening rules are incorporated, the same 55\% of the instances are solved in under 10 seconds, and 87\% of the instances are solved within the time limit of one hour. Therefore, for the instances that are solved to optimality by both methods, \emph{the screening procedure results in a $360 \times$ speedup}. The distribution of the percentage of variables fixed (Figure~\ref{fig:distributionReal}) is similar to the one reported in \S\ref{sec:synt}, and 75\% or more of the variables are fixed in 73\% of the instances.

Figure~\ref{fig:fixed_real} depicts the number of variables fixed for each data set and each value of $\gamma$; the points in the graph represent the average of three instances with different cardinalities. We observe that the screening procedure is able to fix most of the variables for $\gamma \leq 2^6\gamma_0$.  
As $\gamma$ increases further, the strength of the perspective relaxation decreases and the screening procedure is unable to fix as many variables. 

\begin{figure}[!h]
	\begin{center}
		\includegraphics[trim={10cm 6cm 10cm 6cm},clip,width=\columnwidth]{./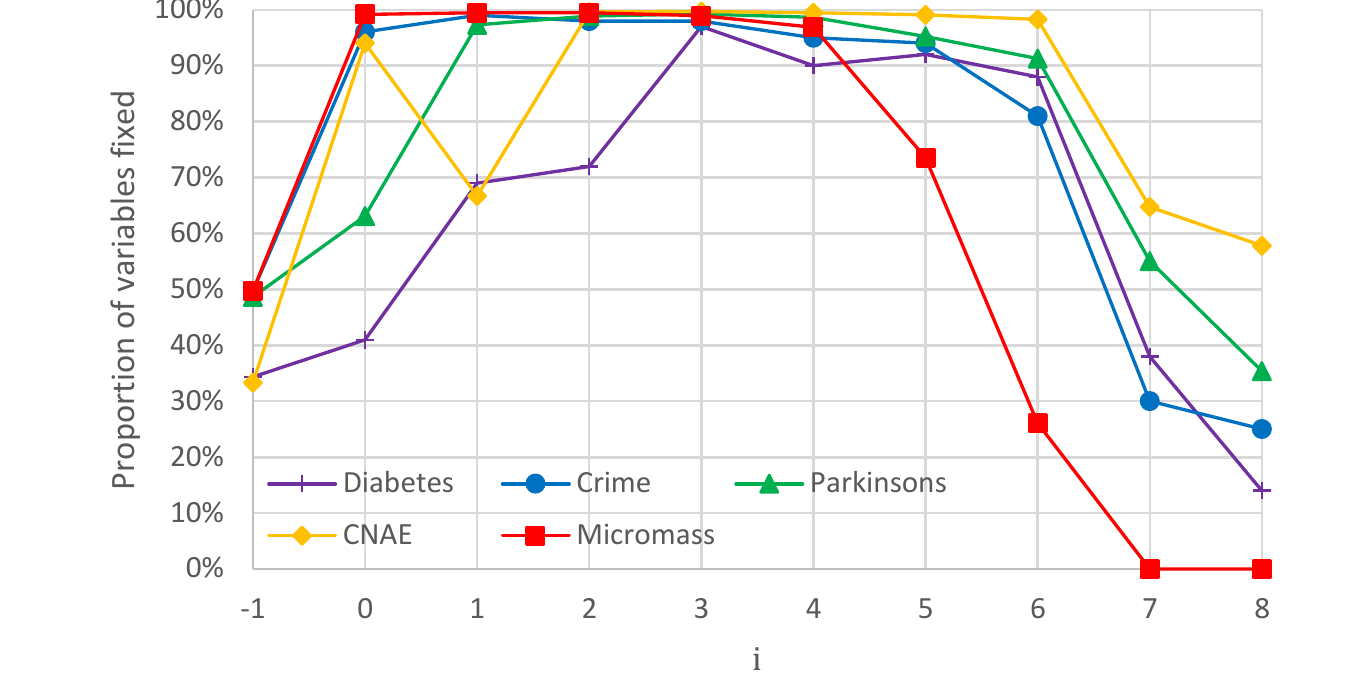}
		\caption{Proportion of variables fixed in instances with real data, where $\gamma=2^i\gamma_0$. Each point is an average of three instances with different cardinalities $k$.}
		\label{fig:fixed_real}
	\end{center}	
\end{figure}

Finally, Table~\ref{tab:small} shows four instances with the Diabetes data set where the screening procedure is able to fix only a small percentage of the variables, yet it results in substantial reduction in solution times\footnote{Instances with $k=10$ on this dataset are solved in five seconds or less independently of the use of the screening procedure, and are omitted. Similarly, instances with $\gamma\geq 2^7\gamma_0$ are solved in under 6 seconds, independent of the use of the screening procedure. Instances on datasets with $n\geq 100$ are rarely solved to optimality unless at least 50\% of the variables are fixed.}. The table shows the time in seconds and the number of branch-and-bound nodes required to solve the problems to optimality, and the \% of variables fixed by the screening procedure. Observe that even by fixing fewer than 20\% of the variables, the screening rule leads to a substantial reduction in running times. In some cases, instances that are not solved to optimality within the one-hour time limit are solved in under 15 seconds with screening.

\begin{table}[!h]
	\begin{center}
		\setlength{\tabcolsep}{3pt}
		\caption{Sample instances with the Diabetes dataset illustrating impact of fixing a small number of variables.}
		\label{tab:small}
		\begin{tabular}{c c | c c |c c c}
			\hline
			\multirow{2}{*}{$\bm{k}$}&\multirow{2}{*}{$\bm{\gamma}$}&\multicolumn{2}{c|}{\textbf{CPLEX}}&\multicolumn{3}{c}{\textbf{CPLEX+screening}}\\
			&   & \textbf{time} & \textbf{nodes}& \textbf{\% fixed} & \textbf{time} & \textbf{nodes}\\
			\hline
			20 & $\gamma_0/2$& 968 & 48,050 & 10.9\% & 303 & 10,552\\
			20 & $\gamma_0$& 2,080 & 80,095 & 14.1\% & $<$1 & 0\\
			30 &  $\gamma_0/2$& 2,791 & 119,638 & 20.3\% & 444 & 30,903\\
			30 &  $\gamma_0$& \text{1hr limit} & 168,311 & 9.4\% & 12 & 272\\
			\hline
		\end{tabular}
	\end{center}
\end{table}

\section{Conclusion}
\label{sec:conc}

We give a simple, yet very effective safe screening procedure for non-convex $\ell_0$ regression problems.
Computational on synthetic and real data sets show that
when used as preprocessing before solving the problems, the screening rules eliminate, on average, 76\% of
the binary variables, and consequently lead to substantial reduction in solution times.
Additional research on strong convex relaxations should lead to even more effective safe screening rules.

\section*{Acknowledgments}

Andr\'es G\'omez is supported, in part, by the National Science Foundation under Grant 1818700.
Alper Atamt\"urk is supported, in part, by grants from ARPA-E, NSF, and ONR.

\bibliography{Bibliography}

\begin{thebibliography}{40}
\providecommand{\natexlab}[1]{#1}
\providecommand{\url}[1]{\texttt{#1}}
\expandafter\ifx\csname urlstyle\endcsname\relax
  \providecommand{\doi}[1]{doi: #1}\else
  \providecommand{\doi}{doi: \begingroup \urlstyle{rm}\Url}\fi

\bibitem[Akt{\"u}rk et~al.(2009)Akt{\"u}rk, Atamt{\"u}rk, and
  G{\"u}rel]{akturk2009strong}
Akt{\"u}rk, M.~S., Atamt{\"u}rk, A., and G{\"u}rel, S.
\newblock A strong conic quadratic reformulation for machine-job assignment
  with controllable processing times.
\newblock \emph{Operations Research Letters}, 37\penalty0 (3):\penalty0
  187--191, 2009.

\bibitem[Atamt\"urk \& G\'omez(2019)Atamt\"urk and
  G\'omez]{atamturk2019rank-one}
Atamt\"urk, A. and G\'omez, A.
\newblock Rank-one convexification for sparse regression.
\newblock \emph{arXiv preprint arXiv:1901.10334}, 2019.

\bibitem[Atamt{\"u}rk et~al.(2000)Atamt{\"u}rk, Nemhauser, and
  Savelsbergh]{atamturk2000conflict}
Atamt{\"u}rk, A., Nemhauser, G.~L., and Savelsbergh, M.~W.
\newblock Conflict graphs in solving integer programming problems.
\newblock \emph{European Journal of Operational Research}, 121\penalty0
  (1):\penalty0 40--55, 2000.

\bibitem[Bertsimas \& Van~Parys(2017)Bertsimas and
  Van~Parys]{bertsimas2017sparse}
Bertsimas, D. and Van~Parys, B.
\newblock Sparse high-dimensional regression: Exact scalable algorithms and
  phase transitions.
\newblock \emph{arXiv preprint arXiv:1709.10029}, 2017.

\bibitem[Bertsimas et~al.(2016)Bertsimas, King, Mazumder,
  et~al.]{bertsimas2016best}
Bertsimas, D., King, A., Mazumder, R., et~al.
\newblock Best subset selection via a modern optimization lens.
\newblock \emph{The Annals of Statistics}, 44\penalty0 (2):\penalty0 813--852,
  2016.

\bibitem[Bertsimas et~al.(2019)Bertsimas, Pauphilet, and
  Van~Parys]{bertsimas2019sparse}
Bertsimas, D., Pauphilet, J., and Van~Parys, B.
\newblock Sparse regression: Scalable algorithms and empirical performance.
\newblock \emph{arXiv preprint arXiv:1902.06547}, 2019.

\bibitem[Cozad et~al.(2014)Cozad, Sahinidis, and Miller]{cozad2014learning}
Cozad, A., Sahinidis, N.~V., and Miller, D.~C.
\newblock Learning surrogate models for simulation-based optimization.
\newblock \emph{AIChE Journal}, 60\penalty0 (6):\penalty0 2211--2227, 2014.

\bibitem[Dedieu et~al.(2020)Dedieu, Hazimeh, and Mazumder]{dedieu2020learning}
Dedieu, A., Hazimeh, H., and Mazumder, R.
\newblock Learning sparse classifiers: Continuous and mixed integer
  optimization perspectives.
\newblock \emph{arXiv preprint arXiv:2001.06471}, 2020.

\bibitem[Dong et~al.(2015)Dong, Chen, and Linderoth]{dong2015regularization}
Dong, H., Chen, K., and Linderoth, J.
\newblock Regularization vs. relaxation: A conic optimization perspective of
  statistical variable selection.
\newblock \emph{arXiv preprint arXiv:1510.06083}, 2015.

\bibitem[Dua \& Graff(2017)Dua and Graff]{dua2019}
Dua, D. and Graff, C.
\newblock {UCI} machine learning repository, 2017.
\newblock URL \url{http://archive.ics.uci.edu/ml}.

\bibitem[Efron et~al.(2004)Efron, Hastie, Johnstone, Tibshirani,
  et~al.]{efron2004least}
Efron, B., Hastie, T., Johnstone, I., Tibshirani, R., et~al.
\newblock Least angle regression.
\newblock \emph{The Annals of Statistics}, 32\penalty0 (2):\penalty0 407--499,
  2004.

\bibitem[Efroymson(1966)]{efroymson1966stepwise}
Efroymson, M.
\newblock Stepwise regression--a backward and forward look.
\newblock \emph{Florham Park, New Jersey}, 1966.

\bibitem[El~Ghaoui \& Lebret(1997)El~Ghaoui and Lebret]{elghaoui1997robust}
El~Ghaoui, L. and Lebret, H.
\newblock Robust solutions to least-squares problems with uncertain data.
\newblock \emph{SIAM Journal on matrix analysis and applications}, 18\penalty0
  (4):\penalty0 1035--1064, 1997.

\bibitem[El~Ghaoui et~al.(2010)El~Ghaoui, Viallon, and
  Rabbani]{elghaoui2010safe}
El~Ghaoui, L.~E., Viallon, V., and Rabbani, T.
\newblock Safe feature elimination for the lasso and sparse supervised learning
  problems.
\newblock \emph{arXiv preprint arXiv:1009.4219}, 2010.

\bibitem[Fercoq et~al.(2015)Fercoq, Gramfort, and Salmon]{fercoq2015mind}
Fercoq, O., Gramfort, A., and Salmon, J.
\newblock Mind the duality gap: Safer rules for the lasso.
\newblock \emph{arXiv preprint arXiv:1505.03410}, 2015.

\bibitem[Frangioni \& Gentile(2006)Frangioni and
  Gentile]{frangioni2006perspective}
Frangioni, A. and Gentile, C.
\newblock Perspective cuts for a class of convex 0--1 mixed integer programs.
\newblock \emph{Mathematical Programming}, 106\penalty0 (2):\penalty0 225--236,
  2006.

\bibitem[Friedman et~al.(2010)Friedman, Hastie, and Tibshirani]{friedman2010}
Friedman, J., Hastie, T., and Tibshirani, R.
\newblock Regularization paths for generalized linear models via coordinate
  descent.
\newblock \emph{Journal of Statistical Software}, 33\penalty0 (1):\penalty0
  1--22, 2010.
\newblock URL \url{http://www.jstatsoft.org/v33/i01/}.

\bibitem[G{\'o}mez \& Prokopyev(2018)G{\'o}mez and Prokopyev]{gomez2018mixed}
G{\'o}mez, A. and Prokopyev, O.
\newblock A mixed-integer fractional optimization approach to best subset
  selection.
\newblock \emph{Optimization-online}, 2018.

\bibitem[G{\"u}nl{\"u}k \& Linderoth(2010)G{\"u}nl{\"u}k and
  Linderoth]{gunluk2010perspective}
G{\"u}nl{\"u}k, O. and Linderoth, J.
\newblock Perspective reformulations of mixed integer nonlinear programs with
  indicator variables.
\newblock \emph{Mathematical Programming}, 124\penalty0 (1-2):\penalty0
  183--205, 2010.

\bibitem[Hastie et~al.(2001)Hastie, Tibshirani, and
  Friedman]{hastie2001elements}
Hastie, T., Tibshirani, R., and Friedman, J.
\newblock \emph{The elements of statistical learning: Data mining, inference,
  and prediction}, volume~1.
\newblock Springer series in statistics New York, NY, USA, 2001.

\bibitem[Hastie et~al.(2015)Hastie, Tibshirani, and
  Wainwright]{hastie2015statistical}
Hastie, T., Tibshirani, R., and Wainwright, M.
\newblock \emph{Statistical learning with sparsity: The lasso and
  generalizations}.
\newblock CRC press, 2015.

\bibitem[Hastie et~al.(2017)Hastie, Tibshirani, and
  Tibshirani]{hastie2017extended}
Hastie, T., Tibshirani, R., and Tibshirani, R.~J.
\newblock Extended comparisons of best subset selection, forward stepwise
  selection, and the lasso.
\newblock \emph{arXiv preprint arXiv:1707.08692}, 2017.

\bibitem[Hazimeh \& Mazumder(2018)Hazimeh and Mazumder]{hazimeh2018fast}
Hazimeh, H. and Mazumder, R.
\newblock Fast best subset selection: Coordinate descent and local
  combinatorial optimization algorithms.
\newblock \emph{arXiv preprint arXiv:1803.01454}, 2018.

\bibitem[Hoerl \& Kennard(1970)Hoerl and Kennard]{hoerl1970ridge}
Hoerl, A.~E. and Kennard, R.~W.
\newblock Ridge regression: Biased estimation for nonorthogonal problems.
\newblock \emph{Technometrics}, 12\penalty0 (1):\penalty0 55--67, 1970.

\bibitem[Kimura \& Waki(2018)Kimura and Waki]{kimura2018minimization}
Kimura, K. and Waki, H.
\newblock Minimization of akaike's information criterion in linear regression
  analysis via mixed integer nonlinear program.
\newblock \emph{Optimization Methods and Software}, 33\penalty0 (3):\penalty0
  633--649, 2018.

\bibitem[Mazumder et~al.(2017)Mazumder, Radchenko, and
  Dedieu]{mazumder2017subset}
Mazumder, R., Radchenko, P., and Dedieu, A.
\newblock Subset selection with shrinkage: Sparse linear modeling when the snr
  is low.
\newblock \emph{arXiv preprint arXiv:1708.03288}, 2017.

\bibitem[Miller(2002)]{miller2002subset}
Miller, A.
\newblock \emph{Subset selection in regression}.
\newblock CRC Press, 2002.

\bibitem[Miyashiro \& Takano(2015)Miyashiro and Takano]{miyashiro2015subset}
Miyashiro, R. and Takano, Y.
\newblock Subset selection by {M}allows’ {C}p: A mixed integer programming
  approach.
\newblock \emph{Expert Systems with Applications}, 42\penalty0 (1):\penalty0
  325--331, 2015.

\bibitem[Ogawa et~al.(2013)Ogawa, Suzuki, and Takeuchi]{ogawa2013safe}
Ogawa, K., Suzuki, Y., and Takeuchi, I.
\newblock Safe screening of non-support vectors in pathwise {SVM} computation.
\newblock In \emph{International Conference on Machine Learning}, pp.\
  1382--1390, 2013.

\bibitem[Park \& Klabjan(2017)Park and Klabjan]{park2017subset}
Park, Y.~W. and Klabjan, D.
\newblock Subset selection for multiple linear regression via optimization.
\newblock \emph{arXiv preprint arXiv:1701.07920}, 2017.

\bibitem[Pilanci et~al.(2015)Pilanci, Wainwright, and
  El~Ghaoui]{pilanci2015sparse}
Pilanci, M., Wainwright, M.~J., and El~Ghaoui, L.
\newblock Sparse learning via boolean relaxations.
\newblock \emph{Mathematical Programming}, 151\penalty0 (1):\penalty0 63--87,
  2015.

\bibitem[Savelsbergh(1994)]{martin-prep}
Savelsbergh, M. W.~P.
\newblock Preprocessing and probing techniques for mixed integer programming
  problems.
\newblock \emph{ORSA J. on Computing}, 6\penalty0 (4):\penalty0 445--454, 1994.

\bibitem[Tibshirani(1996)]{tibshirani1996regression}
Tibshirani, R.
\newblock Regression shrinkage and selection via the lasso.
\newblock \emph{Journal of the Royal Statistical Society: Series B
  (Methodological)}, 58\penalty0 (1):\penalty0 267--288, 1996.

\bibitem[Tibshirani et~al.(2012)Tibshirani, Bien, Friedman, Hastie, Simon,
  Taylor, and Tibshirani]{tibshirani2012strong}
Tibshirani, R., Bien, J., Friedman, J., Hastie, T., Simon, N., Taylor, J., and
  Tibshirani, R.~J.
\newblock Strong rules for discarding predictors in lasso-type problems.
\newblock \emph{Journal of the Royal Statistical Society: Series B (Statistical
  Methodology)}, 74\penalty0 (2):\penalty0 245--266, 2012.

\bibitem[Wang et~al.(2013)Wang, Zhou, Wonka, and Ye]{wang2013lasso}
Wang, J., Zhou, J., Wonka, P., and Ye, J.
\newblock Lasso screening rules via dual polytope projection.
\newblock In \emph{Advances in Neural Information Processing Systems}, pp.\
  1070--1078, 2013.

\bibitem[Xiang \& Ramadge(2012)Xiang and Ramadge]{xiang2012fast}
Xiang, Z.~J. and Ramadge, P.~J.
\newblock Fast lasso screening tests based on correlations.
\newblock In \emph{2012 IEEE International Conference on Acoustics, Speech and
  Signal Processing (ICASSP)}, pp.\  2137--2140. IEEE, 2012.

\bibitem[Xiang et~al.(2016)Xiang, Wang, and Ramadge]{xiang2016screening}
Xiang, Z.~J., Wang, Y., and Ramadge, P.~J.
\newblock Screening tests for lasso problems.
\newblock \emph{IEEE Transactions on Pattern Analysis and Machine
  Intelligence}, 39\penalty0 (5):\penalty0 1008--1027, 2016.

\bibitem[Xie \& Deng(2020)Xie and Deng]{xie2020scalable}
Xie, W. and Deng, X.
\newblock Scalable algorithms for the sparse ridge regression.
\newblock 2020.

\bibitem[Xu et~al.(2009)Xu, Caramanis, and Mannor]{xu2009robustness}
Xu, H., Caramanis, C., and Mannor, S.
\newblock Robustness and regularization of support vector machines.
\newblock \emph{Journal of machine learning research}, 10\penalty0
  (Jul):\penalty0 1485--1510, 2009.

\bibitem[Zou \& Hastie(2005)Zou and Hastie]{zou2005regularization}
Zou, H. and Hastie, T.
\newblock Regularization and variable selection via the elastic net.
\newblock \emph{Journal of the Royal Statistical Society: Series B
  (Methodology)}, 67\penalty0 (2):\penalty0 301--320, 2005.

\end{thebibliography}
\bibliographystyle{icml2020}

\end{document}